\theoremstyle{plain}
\newtheorem{proposition}{Proposition}
\theoremstyle{definition}
\newtheorem{definition}{Definition}
\theoremstyle{remark}
\newtheorem{note}{Note}
\newcommand{\resolve}{\triangleright}
\newcommand{\resolved}{\triangleleft}
\newcommand{\hresolve}{\trianglelefteq}
\newcommand{\hresolved}{\trianglerighteq}
\newcommand{\compatible}{\mapsto}
\newcommand{\hpp}[0]{Haplotype Inference}
\begin{document}


\title{A preliminary analysis on metaheuristics methods applied to the \hpp{} Problem}

\author{
    Luca Di Gaspero\postref{1}
    \and
    Andrea Roli\postref{2}
}

\address{
   \prelabel{1}DIEGM, University of Udine \\
  	via delle Scienze 208, I-33100, Udine, Italy \\
  	\texttt{l.digaspero@uniud.it} 
   \and
   \prelabel{2}DEIS, Campus of Cesena\\
	University of Bologna\\
	 via Venezia 52, I-47023 Cesena, Italy \\
  \texttt{andrea.roli@unibo.it}
}

\date{August 3, 2007}

\deistechno{DEIS-LIA-006-07}
													   
\liatechno{84}	
														   
\keywords{Metaheuristics, haplotype inference, bioinformatics}

\abstractext{%
\hpp{} is a challenging problem in bioinformatics that consists in inferring the basic genetic constitution of diploid organisms on the basis of their genotype. This information allows researchers to perform association studies for the genetic variants involved in diseases and the individual responses to therapeutic agents.

A notable approach to the problem is to encode it as a combinatorial problem (under certain hypotheses, such as the \emph{pure parsimony} criterion) and to solve it using off-the-shelf combinatorial optimization techniques. The main methods applied to \hpp{} are either simple greedy heuristic or exact methods (Integer Linear Programming, Semidefinite Programming, SAT encoding) that, at present, are adequate only for moderate size instances. 

We believe that metaheuristic and hybrid approaches could provide a better scalability. Moreover, metaheuristics can be very easily combined with problem specific heuristics and they can also be integrated with tree-based search techniques, thus providing a promising framework for hybrid systems in which a good trade-off between effectiveness and efficiency can be reached. 

In this paper we illustrate a feasibility study of the approach and discuss some relevant design issues, such as modeling and design of approximate solvers that combine constructive heuristics, local search-based improvement strategies and learning mechanisms. Besides the relevance of the \hpp{} problem itself, this preliminary analysis is also an interesting case study because the formulation of the problem poses some challenges in modeling and hybrid metaheuristic solver design that can be generalized to other problems.

}

\maketitle

%

\section{Introduction}

A fundamental tool of analysis to investigate the genetic variations in a population is based on \emph{haplotype} data. A haplotype is a copy of a chromosome of a diploid organism (i.e., an organism that has two copies of each chromosome, one inherited from the father and one from the mother).
The haplotype information allows to perform association studies for the genetic variants involved in diseases and the individual responses to therapeutic agents. The assessment of a full Haplotype Map of the human genome is indeed one of the current high priority tasks of human genomics \cite{HapMap03}.

Instead of dealing with complete DNA sequences, usually the researchers are focusing on \emph{Single Nucleotide Polymorphisms} (SNPs), which are the most common mutations among haplotypes. A SNP is a single nucleotide site (allele) where exactly two (out of four) different nucleotides occur in a large percentage of the population.

The haplotype collection is not an easy task: in fact, due to technological limitations it is currently infeasible to directly collect haplotypes in an experimental way, but rather it is possible to collect \emph{genotypes}, i.e., the conflation of a pair of haplotypes. Moreover, instruments can only identify whether the individual is \emph{homozygous} (i.e., the alleles are the same) or \emph{heterozygous} (i.e., the alleles are different) at a given site.
Therefore, haplotypes have to be inferred from genotypes in order to reconstruct the detailed information and trace the precise structure of human populations. This process is called \emph{\hpp{}} and the goal is to find a set of haplotype pairs so that all the genotypes are \emph{resolved}.

The main approaches to solve the \hpp{} are either combinatorial or statistical methods. However, both of them, being of non-experimental nature, need some genetic model of haplotype evolution, which poses some hypotheses to constrain the possible inferences to the ones compatible with Nature. In the case of the combinatorial methods, which are the subject of the present work, a reasonable criterion is the \emph{pure parsimony} approach \cite{DBLP:conf/cpm/Gusfield03}, which searches for the smallest collection of distinct haplotypes that solves the \hpp{} problem. This criterion is consistent with current observations in natural populations for which the actual number of haplotypes is vastly smaller than the total number of possible haplotypes. 


Current approaches for solving the problem include simple greedy heuristic \cite{Clark90} and exact methods such as Integer Linear Programming \cite{DBLP:conf/cpm/Gusfield03,DBLP:conf/recomb/HalldorssonBELYI02,DBLP:journals/informs/LanciaPR04,DBLP:journals/tcbb/BrownH06}, Semidefinite Programming \cite{DBLP:conf/bibe/KalpakisN05,DBLP:conf/sac/HuangCC05} and SAT models \cite{DBLP:conf/sat/LynceM06,DBLP:conf/aaai/LynceM06}.
These approaches, however, at present are adequate only for moderate size instances.

To the best of our knowledge, the only attempt to employ metaheuristic techniques for the problem is a recently proposed Genetic Algorithm \cite{WaZS05}. However, also the cited paper does not report results on real size instances. Anyway, we believe that metaheuristic approaches could be effective to this problem since they could provide a better scalability.

In this work we make a preliminary conceptual analysis on the use of metaheuristics for the \hpp{} problem. We start introducing the \hpp{} problem in Section~\ref{sec:hpp-definition} and then we present two possible local search models for the problem (Section~\ref{sec:local-search}) highlighting the possible benefits and drawbacks of each model. Section~\ref{sec:metaheuristic} contains the description of metaheuristic approaches that, in our opinion, could be adequate for \hpp{}. In Section~\ref{sec:constructive} we consider the role of constructive techniques in the hybridization with metaheuristics and, finally, in Section~\ref{sec:discussion} we discuss our proposals and outline future developments.

\section{The \hpp{} problem}
\label{sec:hpp-definition}


In the \hpp{} problem we deal with \emph{genotypes}, that is, strings of length $m$ that corresponds to a chromosome with $m$ sites. Each value in the string belongs to the alphabet $\{0, 1, 2\}$. A position in the genotype is associated with a site of interest on the chromosome (e.g., a SNP) and it has value 0 (wild type) or 1 (mutant) if the corresponding chromosome site is a homozygous site (i.e., it has that state on both copies) or the value 2 if the chromosome site is heterozygous.
A \emph{haplotype} is a string of length $m$ that corresponds to only one copy of the chromosome (in diploid organisms) and whose positions can assume the symbols $0$ or $1$.

\subsection{Genotype resolution}

Given a chromosome, we are interested in finding an unordered\footnote{In the problem there is no distinction between the maternal and paternal haplotypes.} pair of haplotypes that can explain the chromosome according to the following definition:

\begin{definition}[Genotype resolution]
Given a chromosome $g$, we say that the pair $\langle h, k \rangle$ \emph{resolves} $g$, and we write $\langle h, k \rangle \resolve g$ (or $g = h \oplus k$), if the following conditions hold (for $j = 1, \ldots, m$):
\begin{subequations}
\begin{eqnarray}
g[j] = 0 & \Rightarrow & h[j] = 0 \wedge k[j] = 0 \label{eq:chromosome0} \\
g[j] = 1 & \Rightarrow & h[j] = 1 \wedge k[j] = 1 \label{eq:chromosome1} \\
g[j] = 2 & \Rightarrow & (h[j] = 0  \wedge k[j] = 1) \vee (h[j] = 1  \wedge k[j] = 0) \label{eq:chromosome2}
\end{eqnarray}
\end{subequations}

If $\langle h, k \rangle \resolve g$ we indicate the fact that the haplotype $h$ (respectively, $k$) contributes in the resolution of the genotype $g$ writing $h \hresolve g$ (resp., $k \hresolve g$). We extend this notation to set of haplotypes and we write $H = \{h_1, \ldots, h_l\} \hresolve g$, meaning that $h_i \hresolve g$ for all $i = 1, \ldots, l$.
\end{definition}

Conditions~(\ref{eq:chromosome0}) and (\ref{eq:chromosome1}) require that both haplotypes must have the same value in all homozygous sites, while condition~(\ref{eq:chromosome2}) states that in heterozygous sites the haplotypes must have different values.

Observe that, according to the definition, for a single genotype string the haplotype values at a given site are predetermined in the case of homozygous sites, whereas there is a freedom to choose between two possibilities at heterozygous places. This means that for a genotype string with $l$ heterozygous sites there are $2^{l-1}$ possible pairs of haplotypes that resolve it.

As an example, consider the genotype $g = (0 2 1 2)$, then the possible pairs of haplotypes that resolve it are $\langle (0 1 1 0), (0 0 1 1) \rangle$ and $\langle (0 0 1 0), (0 1 1 1) \rangle$.

After these preliminaries we can state the \emph{\hpp{}} problem as follows: 

\begin{definition}[\hpp{} problem]
Given a population of $n$ individuals, each of them represented by a genotype string $g_i$ of length $m$ we are interested in finding a set $R$ of $n$ pairs of (not necessarily distinct) haplotypes $R = \{ \langle h_1, k_1 \rangle, \ldots, \langle h_n, k_n \rangle\}$, so that $\langle h_i, k_i \rangle \resolve g_i, i = 1,\ldots,n$. We call $H$ the set of haplotypes used in the construction of $R$, i.e., $H = \{h_1, \ldots, h_n, k_1, \ldots, k_n\}$.
\end{definition}

Of course, from the mathematical point of view, there are many possibility in constructing the set $R$ since there is an exponential number of possible haplotypes for each genotype.
However, the knowledge about the biological phenomenons should guide the selection of the haplotypes used by constraining it through a genetic model.

One natural model to the \hpp{} problem is the already mentioned \emph{pure parsimony} approach that consists in searching for a solution that minimizes the total number of distinct haplotypes used or, in other words, $|H|$, the cardinality of the set $H$. A trivial upper bound for $|H|$ is $2n$ in the case of all genotypes resolved by a pair of distinct haplotypes.

It has been shown that the \hpp{} problem under the pure parsimony hypothesis is APX-hard \cite{DBLP:journals/informs/LanciaPR04} and therefore NP-hard.

\subsection{Compatibility and complementarity}

It is possible to define a graph that express the compatibility between genotypes, so as to avoid unnecessary checks in the determination of the resolvents.\footnote{In some cases, also a graph representing incompatibilities between genotypes can provide useful information.} Let us build the graph $\mathcal{G} = (G, E)$, in which the set of vertices coincides with the set of the genotypes; in the graph, a pair of genotypes $g_1, g_2$ are connected by an edge whether they are \emph{compatible}, i.e., one or more common haplotypes can resolve both of them. For example, the genotypes $(2210)$ and $(1220)$ are compatible, whereas genotypes $(2210)$ and $(1102)$ are not compatible. The formal definition of this property is as follows.

\begin{definition}[Genotypes compatibility]
Let $g_1$ and $g_2$ be two genotypes, $g_1$ and $g_2$ are \emph{compatible} if, for all $j=1, \ldots, m$, the following conditions hold:
\begin{subequations}
\begin{eqnarray}
g_1[j] = 0 & \Rightarrow & g_2[j] \in \{0, 2\} \\
g_1[j] = 1 & \Rightarrow & g_2[j] \in \{1, 2\} \\
g_2[j] = 2 & \Rightarrow & g_2[j] \in \{0, 1, 2\}
\end{eqnarray}
\end{subequations}
\end{definition}

The same concept can be expressed also between a genotype and a haplotype as in the following definition.

\begin{definition}[Compatibility between genotypes and haplotypes]
Let $g$ be a genotype and $h$ a haplotype, $g$ and $h$ are \emph{compatible} if, for all $j = 1, \ldots, m$, the following conditions hold:
\begin{subequations}
\begin{eqnarray}
g[j] = 0 & \Rightarrow & h[j] = 0 \\
g[j] = 1 & \Rightarrow & h[j] = 1 \\
g[j] = 2 & \Rightarrow & h[j] \in \{0, 1 \}
\end{eqnarray}
\end{subequations}
We denote this relation with $h \compatible g$. Moreover with an abuse of notation we indicate with $h \compatible \{g_1, g_2, \dots \}$ the set of all the genotypes that are compatible with haplotype $h$.
\end{definition}

Observe that the set of compatible genotypes of a haplotype can contain only mutually compatible genotypes (i.e., they form a clique in the compatibility graph).

Another interesting observation is the following. Due to the resolution definition, when one of the two haplotypes composing the pair, say $h$, has been selected, then the other haplotype can be directly inferred from $h$ and the genotype $g$ thanks to the resolution conditions.

\begin{proposition}[Haplotype complement]
Given a genotype $g$ and a haplotype $h \compatible g$, there exists a unique haplotype $k$ such that $h \oplus k = g$. The haplotype $k$ is called the \emph{complement} of $h$ with respect to $g$ and is denoted with $k = g \ominus h$.
\end{proposition}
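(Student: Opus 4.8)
The plan is to prove existence and uniqueness together by a single site-by-site analysis: I would construct the candidate complement explicitly and then show that the resolution conditions leave no other choice. The key observation is that conditions (\ref{eq:chromosome0})--(\ref{eq:chromosome2}) are \emph{local}, in the sense that each determines $k[j]$ from $g[j]$ and $h[j]$ independently of the other coordinates. It therefore suffices to reason one position $j$ at a time and then collect the results over $j = 1, \ldots, m$.

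First I would define the candidate $k$ coordinatewise: set $k[j] = 0$ when $g[j] = 0$, set $k[j] = 1$ when $g[j] = 1$, and set $k[j] = 1 - h[j]$ when $g[j] = 2$. This is a well-formed haplotype, since in every case $k[j] \in \{0,1\}$ (in the heterozygous case using that $h$ is a haplotype, so $h[j] \in \{0,1\}$).

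For existence I would verify that $\langle h, k \rangle \resolve g$ by checking the three conditions at each $j$. Where $g[j] \in \{0,1\}$, the compatibility hypothesis $h \compatible g$ guarantees $h[j] = g[j]$, so together with the definition $k[j] = g[j]$ the conditions (\ref{eq:chromosome0}) and (\ref{eq:chromosome1}) hold. Where $g[j] = 2$, by construction $\{h[j], k[j]\} = \{0,1\}$, which is exactly condition (\ref{eq:chromosome2}). This is the step where compatibility is indispensable: without it one could have, for instance, $g[j] = 0$ but $h[j] = 1$, and then \emph{no} value of $k[j]$ could satisfy (\ref{eq:chromosome0}), so no complement would exist at all.

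For uniqueness I would take any haplotype $k'$ with $h \oplus k' = g$ and read the resolution conditions as constraints on $k'$. At sites with $g[j] = 0$ or $g[j] = 1$ they force $k'[j] = 0$ or $k'[j] = 1$ respectively; at sites with $g[j] = 2$ the disjunction in (\ref{eq:chromosome2}), combined with the already-fixed value $h[j]$, admits only $k'[j] = 1 - h[j]$. Hence $k'[j] = k[j]$ for every $j$, so $k' = k$. I do not expect a genuine obstacle here, as the statement is elementary; the only point demanding attention is the recognition that compatibility is precisely the hypothesis making the homozygous conditions consistent, which is exactly what the existence argument exploits.
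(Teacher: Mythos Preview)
Your argument is correct and follows the same approach as the paper, which simply states that existence and uniqueness are a direct consequence of Conditions~(\ref{eq:chromosome0})--(\ref{eq:chromosome2}). You have merely made explicit the coordinatewise case analysis that the paper leaves implicit.
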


\begin{proof}
The existence and uniqueness of $k$ is a direct consequence of Conditions~(\ref{eq:chromosome0})--(\ref{eq:chromosome2}). 
\end{proof}

\section{Local Search models for \hpp{}}
\label{sec:local-search}

We start our conceptual analysis of metaheuristic approaches for \hpp{} with the basic building blocks of local search methods. 
Indeed, in order to apply this class of methods to a given problem we need to specify three entities, namely the \emph{search space}, the \emph{cost function} and the \emph{neighborhood relation}, that constitute the so-called \emph{local search model} of the problem.
 
In the following subsections we consider some natural choices for the search space and the cost function, and we try to envisage their benefits and drawbacks on the basis of the general experience on different problem domains. Instead, the discussion of the possible neighborhood relations is provided in the next section together with the general strategies for tackling the problem.

\subsection{Search space}

As the search space for the \hpp{} problem we could either adopt a \emph{complete} or \emph{incomplete} representation as explained below.

\begin{description}
\item[Complete representation:] This search space is based on a straightforward encoding of the problem. In this representation we consider, for each genotype $g$, the pair of haplotypes $\langle h, k \rangle$ that resolves it. Therefore in this representation all the genotypes are fully resolved at each state by construction. The search space is therefore the collection of sets $R$ defined as in the problem statement.

Notice that, thanks to complementarity w.r.t. a genotype just one haplotype can be selected to represent a pair of resolvents. Moreover, since the pair is unordered, the haplotype pair $\langle h, k \rangle$ is equal to $\langle k, h \rangle$, therefore we can break this symmetry by selecting the haplotype $h$ as the representative for the state if $h \prec k$, where the symbol $\prec$ indicates the lexicographic precedence of the two strings, otherwise we select $k$ as the representative.

\item[Incomplete representation:]
In the incomplete representation, instead, we deal with sets of haplotypes that are the elements of the set $H$. An element $h \in H$ is a \emph{potential resolvent} of a genotype $g$, and it actually resolves it only whether its companion $k$, so that $\langle h, k \rangle \resolve g$, also belongs to $H$. As a consequence, in this representation not necessarily all the genotypes have a resolvent and the search space is the powerset $\mathcal{P}(\bigcup_{i = 1}^{n} A_{i})$, where $A_{i} = \{ h | \exists k \langle h, k \rangle \resolve g_i \}$ is the set of all potential resolvents of a genotype $g_i$ of the problem.
This formulation is an incomplete representation of the problem, since the genotype resolution is only potential and a solution of the problem must be constructed on the basis of the haplotypes selected in a given state.
\end{description}


The complete representation is in general more adequate for hybridization with constructive methods since it maintains the feasibility (i.e., the resolution of all genotypes) at every state of the search. 
For example it makes it possible to employ local search in a GRASP-like manner \cite{ReRi03} or to hybridize it with more informed strategies making use problem-specific knowledge.

On the other hand, in some problem domains, this type of representation could constrain too much the search so that it is difficult to explore different areas of the search space because of the impossibility to relax feasibility constraints. For this reason, sophisticated metaheuristic approaches must be employed to improve the effectiveness of methods based on this representation.

Moreover, this formulation makes the neighborhood design more difficult since the search can move only from and to fully resolved sets of genotypes. A possibility to deal with this complication could be the fine-grain hybridization with declarative neighborhood formulations that explicitly encode the feasibility constraints (e.g., those expressed through Constraint Programming or Integer Linear Programming).

On the contrary, the incomplete representation does not guarantee feasibility at every search step since in a given state it is not mandatory to resolve all the genotypes. As a consequence, it commonly happens that in this kind of search spaces a great part of the search effort focuses in uninteresting areas of the search space just to pursue a feasible solution. 

Nevertheless, this representation permits more freedom to explore different areas of the search space and a higher degree of flexibility. For example, it allows an easier design of neighborhood relations.

\subsection{Cost function}
\label{sec:cost-function}

As for the cost function we identify different components related either to optimality or to feasibility of the problem.

A natural component is the objective function of the original problem, that is the cardinality $|H|$ of the set of haplotypes employed in the resolution. 

\begin{equation}
f_1 = |H|
\end{equation}

In the incomplete representation, however, we must also include the so-called \emph{distance to feasibility}, which can simply be measured as the number of genotypes not resolved in the current solution.

\begin{equation}
f_2 = |\{g| g\,\textrm{is not resolved from haplotypes in}\,H\}|
\end{equation}

Moreover in both formulations we might want to include some heuristic measure related to the potential quality of the solution. To this respect, a possible measure could be the number of genotypes resolved by each haplotype.

\begin{equation}
f_3 = \sum_{h \in H} |\{g| g\,\textrm{can be resolved by haplotype}\,h\}|
\end{equation}

Components $f_1$ and $f_2$ have to be minimized in the cost function whereas, in contrast, $f_3$ must be maximized. To simplify the formulation we prefer to deal only with minimization components, therefore we consider the function $f'_3 = n|H| - f_3$ ($n|H|$ is an upper bound on function $f_3$) as the component to be included in the cost function.

The cost function $F$ is then the weighted sum of the three components:
\begin{equation}
F = \alpha_1 f_1 + \alpha_2 f_2 + \alpha_3 f'_3 
\end{equation}
in which the weights $\alpha_1$, $\alpha_2$ and $\alpha_3$ must be chosen for the problem at hand to reflect the trade-offs among the different components.

Notice that $\alpha_2$ can be zero in the case of the complete representation of the search space. Conversely, $\alpha_2$ must be a value much greater than the possible values of the other components in the case of the incomplete representation in order to reflect the importance of feasibility.

\section{Metaheuristic approaches}
\label{sec:metaheuristic}

In this section we discuss some possible metaheuristic approaches to tackle the
problem and we emphasize their strengths and weaknesses. 



In the following discussion we will assume states encoded with the incomplete formulation, even if most of the considerations holds also for the complete formulation. Indeed, the difference between the two is that the latter is associated to a cost function in which the distance from a feasible solution is not considered, as all states are feasible.

In general, the problem can be solved either by trying to solve
it for a given cardinality  $k = |H|$, progressively decreased during search, or
by trying to minimize $|H|$ while searching at the same time a set of haplotypes
that resolve all the genotypes. The first approach is often preferred over the
former as it can be easily reduced to a sequence of feasibility problems (see,
for example, the case of graph coloring problem family~\cite{HoSt05}).
Nevertheless, this method might require a very high execution time, because a
quite long series of feasibility problems has often to be solved. In the next
subsections we detail some promising formulations, using either of the mentioned
approaches, which can be effectively employed by metaheuristic strategies such
as Tabu Search, Iterated local search, Variable neighborhood search,
etc.~\cite{BluRol2003:acm-comp-sur}. Indeed, the choice of the actual
metaheuristic is in general orthogonal with respect to the formulation.

\subsection{Iterative minimization of the number of haplotypes}

This approach is based on the classical method of iteratively solving the problem by decreasing values of $k = |H|$.  Hence, the problem is reduced to find $k$ different haplotypes such that all the genotypes can be resolved. A solution to the problem is represented by a set of distinct haplotypes. Neighborhoods can be defined in many ways; in the following we discuss what we believe are the most promising ones.

\subsubsection{Hamming neighborhoods}

As it is often the case with solutions encoded as boolean variables, also in this case we can consider a neighborhood based on Hamming distance. In particular, a good trade-off between exploration and execution time is the 1-Hamming distance neighborhood w.r.t. each haplotype in the current solution. The exploration of such a neighborhood has a time complexity of $O(nk)$, where $k$ is the number of haplotypes and $n$ the number of sites per haplotype. Besides its low time complexity, this neighborhood has the advantage of making it very easy to generate more complex neighborhoods just by concatenating 1-Hamming distance moves. The drawback of this choice relies in its local character of exploring neighboring solutions, that might not help search escape from areas in which it has been attracted.

\subsubsection{Deletion/insertion}

A complementary neighborhood structure is the one that does not pose restrictions on the distance among solutions. The neighborhoods based on deletion/insertion consist in deleting an haplotype in the solution and inserting a new one. The criteria upon which haplotypes are deleted and inserted define the actual neighborhood; moreover, the way the neighborhood is explored represents another degree of freedom.

The most general neighborhood relation would be to try the move deletion/insertion for all the possible combinations, i.e., for each haplotype in the solution, try to substitute it with each possible new haplotype and check the corresponding cost function. Since the number of haplotypes with $n$ sites is $2^n$, this choice is in general not feasible. A more practical solution, instead, consists of choosing the new haplotype among the ones stored in a kind of candidate set of given cardinality. Among the possible ways of generating the candidate set we may distinguish between random and heuristically generated. The general neighborhood relation can thus be defined as follows:

\begin{equation}
N(s) = \{M | M = N(s) \setminus s' \cup z, s' \in N(s), z \in C\}
\end{equation}
where $s$ is the current solution and $C$ the candidate set. Both the elements and cardinality of $C$ can be kept constant or varied during search. Since the complexity of enumerating all the neighborhood is $O(|H|\cdot |C|)$, an efficient balance between exploration and time complexity has to be set.

\subsubsection{A hybrid strategy: Iterated local search}

The characteristics of the previous described neighborhoods can be combined in a single metaheuristic such as Iterated local search~\cite{LMS01}. This local search metaheuristic uses the first neighborhood (1-Hamming) with a local search metaheuristic (e.g., hill climbing, tabu search, etc.) until stagnation, then a run of another local search method with the insertion/deletion neighborhood is performed in order to move away from the attractor (perturbation phase). The strategy is described in algorithm~\ref{alg:2neighILS}.

\begin{algorithm}
\caption{Two neighborhoods Iterated Local Search}
\label{alg:2neighILS}
\begin{algorithmic}
        \STATE $s_0 \leftarrow$  \textsf{GenerateInitialSolution()} 
        \STATE $s^{*} \leftarrow$ \textsf{LocalSearch($s_0$) with 1-Hamming neighborhood}
        \WHILE {termination conditions not met}
                \STATE $s' \leftarrow$ \textsf{LocalSearch($s^{*}$) with I/D neighborhood} \COMMENT{perturbation phase}
                \STATE $s^{*'} \leftarrow$ \textsf{LocalSearch($s'$) with 1-Hamming neighborhood}
                \STATE $s^{*} \leftarrow$ \textsf{ApplyAcceptanceCriterion($s^{*},s^{*'},history$)}
        \ENDWHILE
\end{algorithmic}
\end{algorithm}

\subsection{Minimization and feasibility approach}

The second approach for tackling the \hpp{} problem defines a search strategy that tries to minimize $|H|$ and resolve all the genotypes at the same time. In such a case, it is possible that some genotypes are not resolved during search, therefore also states which are infeasible w.r.t. the original problem formulations are explored during search. We will illustrate two possible strategies for implementing metaheuristics based on this problem formulation.

\subsubsection{Dynamic local search}

The core of a possible local search strategy for this model strongly relies on an effective penalty mechanism that should be able to guide the search toward `good' (i.e., at low values of $|H|$) and feasible (i.e., all genotypes are resolved) states. A very effective technique for achieving such a search guidance is what is usually called \emph{Dynamic local search}~\cite{HoSt05}, in which the cost function is composed of two (or more) components, each with a weight that is dynamically varied during search. In formulas, referring to the cost functions defined in section~\ref{sec:cost-function}:

\begin{eqnarray}
F & = & w_1 f_1 + w_2 f_2 \\ 
& = & w_1 |H| + w_2 |\{g| g\,\textrm{is not resolved from haplotypes in}\,H\}| \nonumber
\end{eqnarray}

The coefficients $w_1$ and $w_2$ are varied during search by trying to explore feasible states with $|H|$ as low as possible. Effective techniques for achieving this goal can be inspired by Guided local search, as discussed in~\cite{BluRol2003:acm-comp-sur} and, more in general, Dynamic local search strategies described in~\cite{HoSt05}. The high level description of this metaheuristic is reported in algorithm~\ref{alg:DLS}. A very effective scheme used for weight update is the \emph{shifting penalty} method, that, in this case, can be implemented as follows. First of all, $w_1$ and $w_2$ have to be initialized in such a way that the contribution of $w_2 f_2$ (feasibility) is higher than the optimization contribution of $w_1 f_1$. Then, during search, if the set of haplotypes resolve all the genotypes for $K$ consecutive iterations, $w_2$ is reduced (by a small random factor $\gamma > 1$), otherwise it is increased (by a comparable value).

\begin{algorithm}
\caption{Dynamic Local Search for \hpp{} problem}
\label{alg:DLS}
\begin{algorithmic}
\STATE $s \leftarrow$ {\sf GenerateInitialSolution()}
\WHILE {termination conditions not met}
        \STATE $s \leftarrow$ {\sf LocalSearch}($s$)
        \STATE {\sf Update}(weights $w_1$, $w_2$)
\ENDWHILE
\end{algorithmic}
\end{algorithm}

\subsubsection{Adaptive constructive approach}

Another very promising approach, which deals with both complete and incomplete representation of the search space, is inspired by Strategic oscillation~\cite{GL97} and Squeaky wheel search~\cite{JC99}. The idea is to expand and reduce the set of haplotypes in solution by trying to minimize the combined cost function $F$ defined above.
The difference between dynamic local search is that the focus of this algorithm is based on the selection of elements to be deleted and to be inserted, i.e., the critical decision is on the delete and insert sets. Several effective techniques can be employed, such as the already mentioned Strategic oscillation and Squeaky wheel search; moreover, some incomplete iterative constructive heuristics could be effectively applied, such as Incomplete dynamic backtracking~\cite{prestwich2002-idb}. 

As an example, we briefly discuss one of the possible implementations of such a strategy for tackling the \hpp{} problem. In this implementation, haplotypes are added one at a time and we use a probabilistic criterion for accepting the new haplotype. As shown in algorithm~\ref{alg:AC}, we start with an empty set of haplotypes and we incrementally add one element at a time. The criteria upon which the candidate set is built and the new haplotype is chosen can be based on heuristics or randomness or both. The new element is always added to the current solution if it decreases the cost function (i.e., the overall quality of the solution increases); even if it does not improve the current solution, it can be added anyway in a simulated annealing fashion. 

When the cardinality of $|H|$ reaches its maximum value (parameter of the algorithm) some haplotypes are removed from the solution and the insertion restarted on this smaller set. As one can easily see, there are many choices a designer has to make and the implementation and comparison of these techniques is subject of ongoing work.

\begin{algorithm}
\caption{Adaptive constructive metaheuristic for the \hpp{} problem}
\label{alg:AC}
\begin{algorithmic}
\STATE $s \leftarrow \emptyset$ \COMMENT{Initial solution is an empty set} 
\WHILE {termination conditions not met}
        \STATE $s' \leftarrow$ $s \cup h'$, $h' \in \rm{C}$ \COMMENT{Choose haplotype to add from the candidate set}
	\STATE $\Delta \leftarrow F(s') - F(s)$
        \IF {$\Delta < 0$}
		\STATE $s \leftarrow s'$
	\ELSE
		\STATE With probability $p(\Delta,history)$: $s \leftarrow s'$
	\ENDIF
	\IF {$|H| > maxCard$}
		\STATE Delete $q$ haplotypes from $s$
	\ENDIF
\ENDWHILE
\end{algorithmic}
\end{algorithm}

\section{Constructive procedures for initial solutions}
\label{sec:constructive}

Constructive heuristics produce often an important contribution to the effectiveness of the metaheuristic. Initial sets of haplotypes can be constructed on the basis of several heuristics. For instance, variations of the Clark's algorithm and the other rule-based methods described in~\cite{Gusf01} can be employed. Moreover, some SAT/Constraint Programming-based techniques can be used to reduce the search space for the initial solution construction. 

One critical factor the designer has to take into account is whether to prefer small $|H|$ over number of resolved genotypes or viceversa. In the first case, a better choice would be to start from an empty set, while in the second a complete set (corresponding to the complete formulation) would be preferable.
Furthermore, constructive procedures should also be employed for generating the candidate sets, as this could make the search more effective.

\section{Discussion and future work}
\label{sec:discussion}

We have presented a feasibility study on the application of metaheuristics to the \hpp{} problem. The main purpose of this work was to point out critical design issues about the problem in order to guide future developments and to foster further research on metaheuristic approaches to this problem.
Indeed, we believe that the \hpp{} problem could become a relevant problem subject of application of metaheuristic techniques. However, besides the relevance of the \hpp{} problem itself, this preliminary analysis has posed some challenges in modeling hybrid metaheuristic solver design that can be generalized to other problems.

The result of this analysis is in some sense a roadmap of the work that has still to be done. The first steps consist in implementing the described metaheuristic solvers for the problem and evaluating them on some benchmarks. To this respect we plan to use a set of publicly available datasets (simulated or extracted from the HapMap project), which are provided by J. Marchini at the website \url{http://www.stats.ox.ac.uk/~marchini/phaseoff.html}. 

Afterwards, we will investigate the ways in which domain specific knowledge can be integrated in the metaheuristics. As an example, it is possible to easily construct (in)compatibility relations between genotypes and it will be interesting to exploit this information to make the search more effective. Other ways to integrate domain specific information is through the hybridization with constructive techniques or with other search paradigms (e.g., Constraint Programming \cite{Apt03}).

Finally, in future work we intend also to take into consideration different kind of metaheuristics such as population-based algorithms (e.g., Ant-Colony Optimization \cite{DoSt04} or Memetic algorithms \cite{Mosc89}).

Some of the metaheuristics described in this paper have been already implemented and subject of ongoing testing. We are currently designing and implementing advanced metaheuristics for the complete and incomplete formulation.

\section*{Appendix: Further observations on the problem}

To the best of our knowledge, there have been no attempts to exploit structural properties of the problem which can be deduced from compatibility graphs, or other problem representations. In this section, we present a {\it{reduction procedure}} that starts from a set of haplotypes in the complete representation and tries to reduce its cardinality by exploiting compatibility properties of the instance. Other heuristics based on graph representation of the problem are subject of ongoing work.

\subsection{Haplotype cardinality reduction}

Let us illustrate this property with an example. Consider the following set of genotypes, which corresponds to the compatibility graph in Figure~\ref{fig:ex-compatibility}.

\noindent
\begin{tabular}{*{3}{p{0.27\textwidth}}}
$g_1: (2210212)$ & $g_3: (1212122)$ & $g_5: (1202201)$ \\
$g_2: (2112110)$ & $g_4: (1222122)$ & ~ \\
\end{tabular}

\begin{figure}[t]
\begin{center}

\begin{picture}(56,32)(0,-32)

\node[NLangle=0.0](n0)(0.0,-16.0){$g_1$}
\node[NLangle=0.0](n1)(16.0,-0.0){$g_2$}
\node[NLangle=0.0](n2)(16.0,-32.0){$g_3$}
\node[NLangle=0.0](n3)(32.0,-16.0){$g_4$}
\node[NLangle=0.0](n4)(56.0,-16.0){$g_5$}

\drawedge[dash={3.0 5.0}{0.0},AHnb=0](n0,n1){ }
\drawedge[dash={3.0 5.0}{0.0},AHnb=0](n1,n3){ }
\drawedge[dash={3.0 5.0}{0.0},AHnb=0](n3,n4){ }
\drawedge[dash={3.0 5.0}{0.0},AHnb=0](n3,n2){ }
\drawedge[dash={3.0 5.0}{0.0},AHnb=0](n2,n0){ }
\drawedge[dash={3.0 5.0}{0.0},AHnb=0](n0,n3){ }
\drawedge[dash={3.0 5.0}{0.0},AHnb=0](n1,n2){ }
\drawedge[dash={3.0 5.0}{0.0},AHnb=0](n1,n0){ }

\end{picture}

\end{center}
\caption{Compatibility graph}
\label{fig:ex-compatibility}
\end{figure}
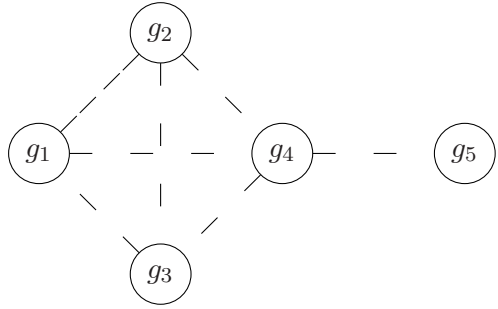

Now consider the following set of haplotypes that resolves all the genotypes $g_1, \ldots, g_5$. The set consists of 8 distinct haplotypes; genotypes currently resolved by a given haplotype are denoted by bold typeface in the resolvent list (i.e., we write $\textbf{g}$ if $g \hresolved h \in H$).

\noindent
\begin{tabular}{*{2}{p{0.5\textwidth}}}
\begin{eqnarray*}
g_1 & \resolved & 
\begin{cases}
a = (1110110) & \compatible \{\mathbf{g_1}, \mathbf{g_2}, g_3, g_4\} \\
b = (0010010) & \compatible \{\mathbf{g_1}\} \\
\end{cases} \\
g_2 & \resolved & 
\begin{cases}
c = (0111110) & \compatible \{\mathbf{g_2}\} \\
a = (1110110) & \compatible \{\mathbf{g_1}, \mathbf{g_2}, g_3, g_4\} \\
\end{cases} \\
g_3 & \resolved & 
\begin{cases}
d = (1111101) & \compatible \{\mathbf{g_3}, g_4\} \\
e = (1010110) & \compatible \{g_1, \mathbf{g_3}, g_4\} \\
\end{cases} \\
\end{eqnarray*}
&
\begin{eqnarray*}
g_4 & \resolved & 
\begin{cases}
f = (1101101) & \compatible \{\mathbf{g_4}, \mathbf{g_5}\} \\
p = (1010101) & \compatible \{g_3, \mathbf{g_4}\} \\
\end{cases} \\
g_5 & \resolved & 
\begin{cases}
f = (1101101) & \compatible \{\mathbf{g_4}, \mathbf{g_5}\} \\
q = (1000001) & \compatible \{\mathbf{g_5}\} \\
\end{cases} \\
\end{eqnarray*}
\end{tabular}

Notice that the haplotype $a = (1110110)$, resolving $g_1$, is compatible also with genotypes $g_2$, $g_3$ and $g_4$ and, as a matter of fact, it resolves $g_2$. This configuration is depicted in the bipartite graph of Figure~\ref{fig:pre-resolution} (dashed edges between genotypes represent genotype compatibility) in which the haplotypes are represented by square nodes, the compatibility between haplotypes and genotypes is represented by solid edges and a bold edge represents the current resolution of a genotype by a haplotype. We will call this graph {\it{extended (compatibility) graph}}. The constraint on genotypes resolution is mapped onto the extended graph by imposing that every genotype node must have (at least) two bold edges. The goal of the reduction procedure is to try to decrease the number of distinct haplotypes, i.e., the number of square nodes while satisfying the resolution constraint. The intuition behind the procedure is that a possible way of reducing the haplotype number is to resolve a genotype by a haplotype that is compatible, but not currently resolving it, i.e., changing an arc from solid to bold. Of course, this move must be followed by repairing moves in the graph so that the state is still feasible.These moves consists in adding one or more haplotypes and relinking some nodes.

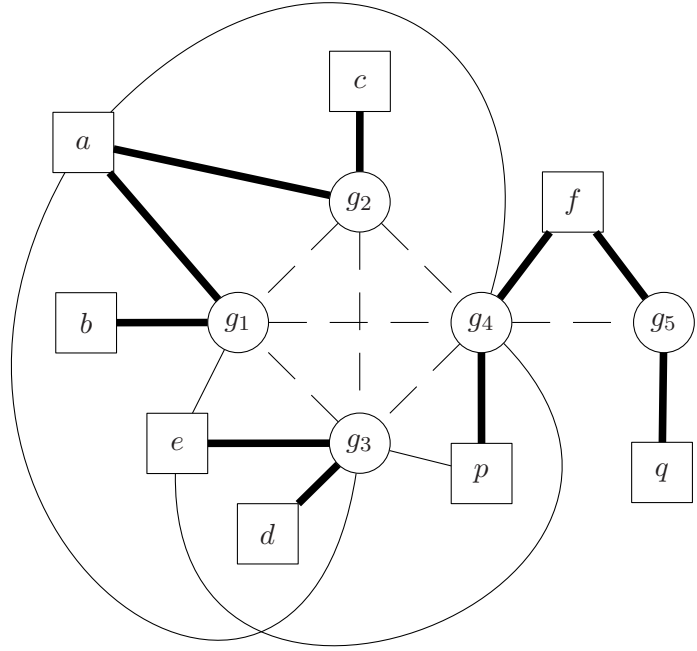
\begin{figure}[t]
\begin{center}

\begin{picture}(119,90)(0,-90)

\node[NLangle=0.0](n0)(36.25,-44.12){$g_1$}
\node[NLangle=0.0](n1)(52.25,-28.12){$g_2$}
\node[NLangle=0.0](n2)(52.25,-60.12){$g_3$}
\node[NLangle=0.0](n3)(68.25,-44.12){$g_4$}
\node[NLangle=0.0](n4)(92.25,-44.12){$g_5$}
\node[NLangle=0.0,Nmr=0.0](n5)(15.89,-20.24){$a$}
\node[NLangle=0.0,Nmr=0.0](n6)(16.26,-44.16){$b$}
\node[NLangle=0.0,Nmr=0.0](n7)(28.25,-60.15){$e$}
\node[NLangle=0.0,Nmr=0.0](n8)(40.15,-72.14){$d$}
\node[NLangle=0.0,Nmr=0.0](n9)(52.27,-12.14){$c$}
\node[NLangle=0.0,Nmr=0.0](n10)(68.27,-64.14){$p$}
\node[NLangle=0.0,Nmr=0.0](n12)(80.27,-28.14){$f$}
\node[NLangle=0.0,Nmr=0.0](n13)(92.0,-64.0){$q$}

\drawedge[dash={5.0 3.0}{0.0},AHnb=0](n1,n3){ }
\drawedge[dash={5.0 3.0}{0.0},AHnb=0](n3,n4){ }
\drawedge[dash={5.0 3.0}{0.0},AHnb=0](n3,n2){ }
\drawedge[dash={5.0 3.0}{0.0},AHnb=0](n2,n0){ }
\drawedge[dash={5.0 3.0}{0.0},AHnb=0](n0,n3){ }
\drawedge[dash={5.0 3.0}{0.0},AHnb=0](n1,n2){ }
\drawedge[dash={5.0 3.0}{0.0},AHnb=0](n1,n0){ }

\drawedge[linewidth=1.0,AHnb=0](n6,n0){ }
\drawbpedge[AHnb=0](n5,50,47.11,n3,70,41.17){ }
\drawbpedge[AHnb=0](n5,-123,61.05,n2,-95,56.69){ }
\drawedge[linewidth=1.0,AHnb=0](n5,n1){ }
\drawedge[linewidth=1.0,AHnb=0](n5,n0){ }
\drawedge[AHnb=0](n7,n0){ }
\drawedge[linewidth=1.0,AHnb=0](n7,n2){ }
\drawbpedge[AHnb=0](n7,-96,52.16,n3,-41,52.73){ }
\drawedge[linewidth=1.0,AHnb=0](n8,n2){ }
\drawedge[linewidth=1.0,AHnb=0](n9,n1){ }
\drawedge[linewidth=1.0,AHnb=0](n10,n3){ }
\drawedge[AHnb=0](n10,n2){ }
\drawedge[linewidth=1.0,AHnb=0](n12,n4){ }
\drawedge[linewidth=1.0,AHnb=0](n12,n3){ }
\drawedge[linewidth=1.0,AHnb=0](n13,n4){ }

\end{picture}

\end{center}
\caption{Haplotype resolution graph}
\label{fig:pre-resolution}
\end{figure}

From the situation described above, we can use the resolvent of $g_1$ to resolve $g_3$ or $g_4$, however the situation is different in the two cases. For example, if we use the resolvent $a$ of $g_1$ to resolve $g_3$, then the situation will become:

\begin{eqnarray*}
g_3 & \resolved & 
\begin{cases}
a = (1110110) & \compatible \{\mathbf{g_1}, \mathbf{g_2}, \mathbf{g_3}, g_4 \} \\
r = (1011101) & \compatible \{\mathbf{g_3}, g_4 \} \\
\end{cases}
\end{eqnarray*}

As an effect of the reduction, the total number of haplotypes employed in the solution has been decreased by 1, since now haplotypes $d$ and $e$ resolving $g_3$ are replaced by $a$, which was already a member of the haplotype set, and a new haplotype $r$ (see Figure~\ref{fig:reduction-g3}).

\begin{figure}[t]
\begin{center}

\begin{picture}(119,90)(0,-90)

\node[NLangle=0.0](n0)(36.25,-44.12){$g_1$}
\node[NLangle=0.0](n1)(52.25,-28.12){$g_2$}
\node[NLangle=0.0](n2)(52.25,-60.12){$g_3$}
\node[NLangle=0.0](n3)(68.25,-44.12){$g_4$}
\node[NLangle=0.0](n4)(92.25,-44.12){$g_5$}

\node[NLangle=0.0,Nmr=0.0](n5)(15.89,-20.24){$a$}
\node[NLangle=0.0,Nmr=0.0](n6)(16.26,-44.16){$b$}
\node[linecolor=Gray,linewidth=0.5,NLangle=0.0,Nmr=0.0](n8)(84.0,-80.0){$r$}
\node[NLangle=0.0,Nmr=0.0](n9)(52.27,-12.14){$c$}
\node[NLangle=0.0,Nmr=0.0](n10)(68.27,-64.14){$p$}
\node[NLangle=0.0,Nmr=0.0](n12)(80.27,-28.14){$f$}
\node[NLangle=0.0,Nmr=0.0](n13)(92.0,-64.0){$q$}

\drawedge[dash={3.0 5.0}{0.0},AHnb=0](n1,n3){ }
\drawedge[dash={3.0 5.0}{0.0},AHnb=0](n3,n4){ }
\drawedge[dash={3.0 5.0}{0.0},AHnb=0](n3,n2){ }
\drawedge[dash={3.0 5.0}{0.0},AHnb=0](n2,n0){ }
\drawedge[dash={3.0 5.0}{0.0},AHnb=0](n0,n3){ }
\drawedge[dash={3.0 5.0}{0.0},AHnb=0](n1,n2){ }
\drawedge[dash={3.0 5.0}{0.0},AHnb=0](n1,n0){ }

\drawedge[linewidth=1.0,AHnb=0](n6,n0){ }
\drawbpedge[AHnb=0](n5,49,47.11,n3,70,41.17){ }
\drawbpedge[linewidth=1.0,AHnb=0](n5,-122,61.05,n2,-95,56.69){ }
\drawedge[linewidth=1.0,AHnb=0](n5,n1){ }
\drawedge[linewidth=1.0,AHnb=0](n5,n0){ }

\drawedge[linewidth=1.0,AHnb=0](n9,n1){ }
\drawedge[linewidth=1.0,AHnb=0](n10,n3){ }
\drawedge[AHnb=0](n10,n2){ }

\drawedge[linewidth=1.0,AHnb=0](n12,n4){ }
\drawedge[linewidth=1.0,AHnb=0](n12,n3){ }
\drawedge[linewidth=1.0,AHnb=0](n13,n4){ }
\drawedge[AHnb=0](n8,n3){ }
\drawedge[linewidth=1.0,AHnb=0,curvedepth=8.0](n8,n2){ }

\end{picture}

\end{center}
\caption{The haplotype resolution graph after the reduction of $g_3$}
\label{fig:reduction-g3}
\end{figure}
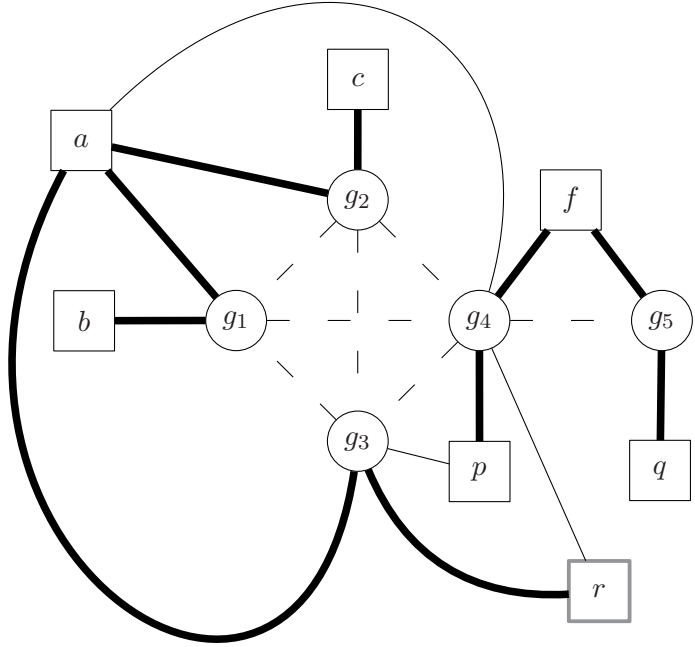

Instead, if we use the resolvent $a$ to resolve $g_4$ we obtain the following situation (see Figure~\ref{fig:reduction-g4}):

\begin{eqnarray*}
g_4 & \resolved & 
\begin{cases}
a = (1110110) & \compatible \{\mathbf{g_1}, \mathbf{g_2}, \mathbf{g_3}, \mathbf{g_4} \} \\
s = (1001101) & \compatible \{\mathbf{g_4} \} \\
\end{cases}
\end{eqnarray*}

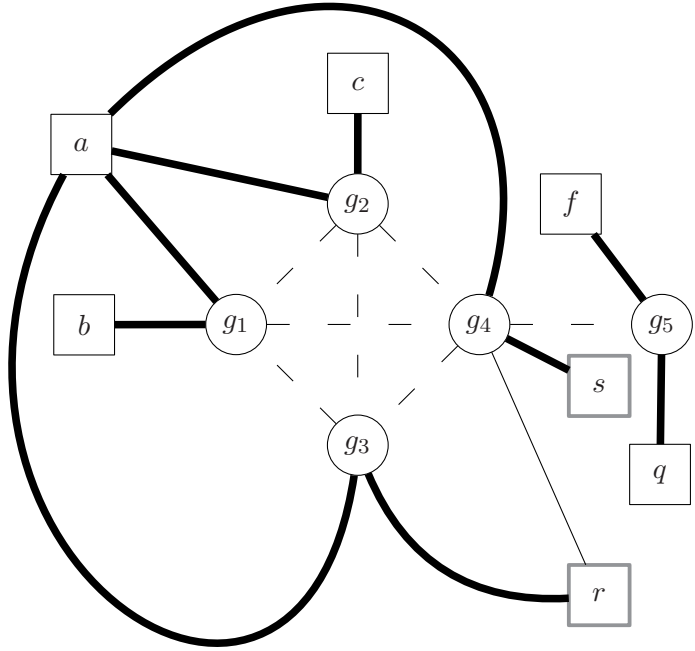
\begin{figure}[t]
\begin{center}

\begin{picture}(119,90)(0,-90)

\node[NLangle=0.0](n0)(36.25,-44.12){$g_1$}
\node[NLangle=0.0](n1)(52.25,-28.12){$g_2$}
\node[NLangle=0.0](n2)(52.25,-60.12){$g_3$}
\node[NLangle=0.0](n3)(68.25,-44.12){$g_4$}
\node[NLangle=0.0](n4)(92.25,-44.12){$g_5$}

\node[NLangle=0.0,Nmr=0.0](n5)(15.89,-20.24){$a$}
\node[NLangle=0.0,Nmr=0.0](n6)(16.26,-44.16){$b$}
\node[linecolor=Gray,linewidth=0.5,NLangle=0.0,Nmr=0.0](n8)(84.0,-80.0){$r$}
\node[NLangle=0.0,Nmr=0.0](n9)(52.27,-12.14){$c$}
\node[linecolor=Gray,linewidth=0.5,NLangle=0.0,Nmr=0.0](n10)(84.0,-52.26){$s$}
\node[NLangle=0.0,Nmr=0.0](n12)(80.27,-28.14){$f$}
\node[NLangle=0.0,Nmr=0.0](n13)(92.0,-64.0){$q$}

\drawedge[dash={3.0 5.0}{0.0},AHnb=0](n1,n3){ }
\drawedge[dash={3.0 5.0}{0.0},AHnb=0](n3,n4){ }
\drawedge[dash={3.0 5.0}{0.0},AHnb=0](n3,n2){ }
\drawedge[dash={3.0 5.0}{0.0},AHnb=0](n2,n0){ }
\drawedge[dash={3.0 5.0}{0.0},AHnb=0](n0,n3){ }
\drawedge[dash={3.0 5.0}{0.0},AHnb=0](n1,n2){ }
\drawedge[dash={3.0 5.0}{0.0},AHnb=0](n1,n0){ }
\drawedge[linewidth=1.0,AHnb=0](n6,n0){ }
\drawbpedge[linewidth=1.0,AHnb=0](n5,49,47.11,n3,70,41.17){ }
\drawbpedge[linewidth=1.0,AHnb=0](n5,-122,61.05,n2,-95,56.69){ }
\drawedge[linewidth=1.0,AHnb=0](n5,n1){ }
\drawedge[linewidth=1.0,AHnb=0](n5,n0){ }
\drawedge[linewidth=1.0,AHnb=0](n9,n1){ }
\drawedge[linewidth=1.0,AHnb=0](n10,n3){ }
\drawedge[linewidth=1.0,AHnb=0](n12,n4){ }
\drawedge[linewidth=1.0,AHnb=0](n13,n4){ }
\drawedge[AHnb=0](n8,n3){ }
\drawedge[linewidth=1.0,AHnb=0,curvedepth=8.0](n8,n2){ }

\end{picture}

\end{center}
\caption{The haplotype resolution graph after the reduction of $g_4$}
\label{fig:reduction-g4}
\end{figure}

Differently from the previous reduction, in this case the number of haplotypes in the solution has not changed. The reason of this is that the haplotype $f$ was already shared between the resolutions of $g_4$ and $g_5$, therefore the reduction operation removes one haplotype but it introduces also a new one, leaving the total number of haplotypes unchanged.

The situation of the example can be generalized in the following proposition.

\begin{proposition}[Haplotype local reduction]
Given $n$ genotypes $G = \{g_1, \ldots, g_n\}$ and the resolvent set $R = \{ \langle h_1, b \rangle, \ldots, \langle h_n, k_n \rangle\}$, so that $\langle h_i, k_i \rangle \resolve g_i$. Suppose there exist two genotypes $g, g' \in G$ such that:

\noindent
\begin{equation}
g \resolved 
\begin{cases}
h & \compatible \{ g, g', \ldots \} \\
k & \compatible \{g, \ldots \} \\
\end{cases}
, \quad
g' \resolved
\begin{cases}
h' & \compatible \{g', \ldots \} \\
k' & \compatible \{g', \ldots \} \\
\end{cases}
\end{equation}

\noindent
and $h \neq h'$, $h \neq k'$, $h' \hresolve A$, $k' \hresolve B$.

The replacement of $\langle h', k' \rangle$ with $\langle h, g' \ominus h \rangle$ in the resolution of $g'$ is a correct resolution that employs a number of distinct haplotypes according to the following criteria:
\begin{itemize} 
\item if $|A| = 1$ and $|B| = 1$, the new resolution uses \emph{at most} one less distinct haplotype;
\item if $|A| > 1$ and $|B| = 1$ (or symmetrically, $|A| = 1$ and $|B| > 1$), the new resolution uses \emph{at most} the same number of distinct haplotypes;
\item in the remaining case the new resolution uses \emph{at most} one more distinct haplotype.
\end{itemize}
\end{proposition}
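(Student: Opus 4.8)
The plan is to verify that the prescribed substitution really is a legal resolution, and then to track the net change in the number of distinct haplotypes by a bookkeeping argument governed entirely by the two cardinalities $|A|$ and $|B|$. Feasibility is immediate: since $h \compatible g'$ by hypothesis, the Haplotype complement proposition yields a \emph{unique} haplotype $k'' := g' \ominus h$ with $h \oplus k'' = g'$, that is $\langle h, k'' \rangle \resolve g'$. Reassigning $g'$ from $\langle h', k' \rangle$ to $\langle h, k'' \rangle$ leaves the resolvent of every other genotype untouched, so the modified set is still a valid resolution of $G$; this is precisely what ``correct resolution'' asserts.

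Next I would account for the distinct haplotypes. Let $H$ denote the distinct-haplotype set before the move and $H'$ the one after. Three effects occur. (i) The haplotype $h$ now placed on $g'$ is already present in $H$, since it resolves $g$ and $g \neq g'$; the hypotheses $h \neq h'$ and $h \neq k'$ guarantee that $h$ genuinely differs from the pair it displaces, so it contributes nothing new. (ii) The complement $k''$ is the only haplotype that can be new, hence it raises the count by \emph{at most} one. (iii) The displaced resolvents $h'$ and $k'$ are freed from $g'$; because $g' \in A$ and $g' \in B$, the haplotype $h'$ leaves $H$ precisely when $A = \{g'\}$, i.e.\ $|A| = 1$, and symmetrically $k'$ leaves $H$ precisely when $|B| = 1$. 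Whenever $|A| > 1$ (resp.\ $|B| > 1$) the haplotype $h'$ (resp.\ $k'$) still resolves some other genotype and must be retained.

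Combining the three effects gives the worst-case estimate $|H'| \le |H| + 1 - \mathbf{1}[\,|A|=1\,] - \mathbf{1}[\,|B|=1\,]$, where $\mathbf{1}[\cdot]$ equals $1$ when its argument holds and $0$ otherwise, and the leading $+1$ charges the possibly-new $k''$. Specializing reproduces the three bullets exactly: $|A|=|B|=1$ gives $|H'| \le |H|-1$; exactly one of $|A|,|B|$ equal to $1$ gives $|H'| \le |H|$; and $|A|,|B| > 1$ gives $|H'| \le |H|+1$. The qualifier ``at most'' is precisely the slack in effect (ii): if $k''$ happens to coincide with a haplotype already in $H$, the count improves by one beyond the stated bound. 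I would also cross-check this against the two worked examples, where the $g_3$ reduction is the $|A|=|B|=1$ case (count drops by one) and the $g_4$ reduction is the mixed case (count unchanged, because $f$ is shared with $g_5$ and cannot be released).

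I expect the only delicate point to be effect (iii): recognizing that $h'$ and $k'$ may be discarded \emph{only} when they are not shared with the resolution of another genotype, which is exactly what $|A|$ and $|B|$ record. One must also confirm that no accidental coincidence among $h$, $k''$, $h'$, $k'$ corrupts the tally; the hypotheses $h \neq h'$ and $h \neq k'$ rule out the harmful collisions, while any extra coincidence of $k''$ with an existing haplotype can only strengthen the conclusion. The argument tacitly assumes the resolving pair of $g'$ consists of two genuine haplotypes (as in any genotype with a heterozygous site), which is the situation of interest.
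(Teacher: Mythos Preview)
Your argument is correct and follows essentially the same route as the paper: first invoke compatibility and the complement proposition to certify that $\langle h, g'\ominus h\rangle$ is a legitimate resolution of $g'$, then do a case analysis on whether $h'$ and $k'$ are shared with other genotypes (governed by $|A|$ and $|B|$), with the ``at most'' slack attributed to the possibility that $g'\ominus h$ already lies in $H$. Your indicator-function inequality $|H'|\le|H|+1-\mathbf{1}[\,|A|=1\,]-\mathbf{1}[\,|B|=1\,]$ is a tidy repackaging of the paper's three verbal cases, and your closing remarks on possible coincidences and the homozygous edge case mirror the paper's own caveat in the note following the proof.
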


\begin{proof}
The proof of the proposition is straightforward. The resolution is obviously correct because $h$ is compatible with $g'$ and $g' \ominus h$ is the complement of $h$ with respect to $g'$. 

Concerning the validity of the conditions on the cardinality, let us proceed by cases and first consider the situation in which $g' \ominus h$ does not resolve any other genotype but $g'$.

If $|A| = |B| = 1$, then $h'$ and $k'$ are not shared with other genotype resolutions so they will not appear in the set $H$ after the replacement, therefore since in the new resolution $h$ is shared between $g$ and $g'$ the cardinality of $H$ is decreased by one. 

Conversely, if one of the sets $|A|$ or $|B|$ consists of more than a genotype and the other set of just one genotype, there is no guarantee of obtaining an improvement from the replacement. Indeed, since one of the two haplotypes is already shared with another genotype there is just a replacement of the shared haplotype with another one in the set $H$. 

Finally, when $|A| > 1$ and $|B| > 1$ both $h'$ and $k'$ are shared with other genotypes therefore the replacement introduces the new haplotype $g' \ominus h$ in the set $H$.

Moving to the situation in which $g' \ominus h$ resolves also other genotypes, the same considerations apply; additionally, given that $g' \ominus h$ is already present in $H$, the number of distinct haplotypes employed in the resolution is decreased by one.
For this reason the estimation of the changes of $|H|$ is conservative. 
\end{proof}

\begin{note}
Perhaps some additional observation could be made in the case of homozygous genotypes $g \resolved \langle h,k \rangle$.
\end{note}

A preliminary experimental analysis on the application of the haplotype local reduction shows that the results achieved are very promising.

\bibliography{haplotype}
\bibliographystyle{plainnat}

\end{document}